\newtheorem{remark}{Remark}
\newtheorem{example}{Example}
\newtheorem{theorem}{Theorem}
\newtheorem{task}{Task}
\newtheorem{proposition}{Proposition}
\newtheorem{definition}{Definition}
\begin{document} 

\twocolumn[
\icmltitle{Cross-Graph Learning of Multi-Relational Associations}
\icmlauthor{Hanxiao Liu}{hanxiaol@cs.cmu.edu}
\icmlauthor{Yiming Yang}{yiming@cs.cmu.edu}
\icmladdress{Carnegie Mellon University, Pittsburgh, PA 15213, USA}
\icmlkeywords{Multi-relational Learning, ICML}
\vskip 0.3in
]

\begin{abstract} 
Cross-graph Relational Learning (CGRL) refers to the problem of predicting the strengths or labels of multi-relational tuples of heterogeneous object types,
through the joint inference over multiple graphs which specify the internal connections among each type of objects.
CGRL is an open challenge in machine learning due to the daunting number of all possible tuples to deal with when the numbers of nodes in multiple graphs are large,
and because the labeled training instances are extremely sparse as typical.
Existing methods such as tensor factorization or tensor-kernel machines do not work well because of the lack of convex formulation for the optimization of CGRL models, the poor scalability of the algorithms in handling combinatorial numbers of tuples, and/or the non-transductive nature of the learning methods which limits their ability to leverage unlabeled data in training.
This paper proposes a novel framework which formulates CGRL as  a convex optimization problem, enables transductive learning using both labeled and unlabeled tuples, and offers a scalable algorithm that guarantees the optimal solution and enjoys a linear time complexity with respect to the sizes of input graphs.
In our experiments with a subset of DBLP publication records and an Enzyme multi-source dataset, the proposed method successfully scaled to the large cross-graph inference problem, and outperformed other representative approaches significantly. 
\end{abstract} 

\section{Introduction}
\label{sec:intro}
Many important problems in multi-source relational learning could be cast as joint learning over multiple graphs about how heterogeneous types of objects interact with each other. In literature data analysis, for example, publication records provide rich information about how authors collaborate with each other in a co-authoring graph, how papers are linked in citation networks, how keywords are related via ontology, and so on. The challenging question is about how to combine such heterogeneous information in individual graphs for the labeling or scoring of the multi-relational associations in tuples like \texttt{(author,paper,keyword)},  given some observed instances of such tuples as the labeled training set.  Automated labeling or scoring of unobserved tuples allows us to discover who have been active in the literature on what areas of research, and to predict who would become influential in which areas in the future. 
In protein data analysis, as another example, a graph of proteins with pairwise sequence similarities is often jointly studied with a graph of chemical compounds with their structural similarities for the discovery of interesting patterns in \texttt{(compound,protein)} pairs. We call the prediction problem in both examples \textit{cross-graph learning of multi-relational associations}, or simply \textit{cross-graph relational learning} (CGRL), where the multi-relational associations are defined by the tuples of heterogeneous types of objects, and each object type has its own graph with type-specific relational structure as a part of the provided data.  The task is to predict the labels or the scores of unobserved multi-relational tuples, conditioned on a relatively small set of labeled instances.

CGRL is an open challenge in machine learning for several reasons. Firstly, the number of multi-relational tuples grows combinatorially in the numbers of individual graphs and the number of nodes in each graph.
How to make cross-graph inference computationally tractable for large graphs is a tough challenge. Secondly, how to combine the internal structures or relations in individual graphs for joint inference in a principled manner is an open question.  Thirdly, supervised information (labeled instances) is typically extremely sparse in CGRL due to the very large number of all possible combinations of heterogeneous objects in individual graphs.  Consequently, the success of cross-graph learning crucially depends on effectively leveraging the massively available unlabeled tuples (and the latent relations among them) in addition to the labeled training data. In other words, how to make the learning transductive is crucial for the true success of CGRL.  Research on transdcutive CGRL has been quite limited, to our knowledge. 
%Addressing these open challenges is the main focus of this paper.

Existing approaches in CGRL or CGRL-related areas can be outlined as those using tensors or graph-regularized tensors, and kernel machines that combine multiple kernels.

Tensor methods have been commonly used for combining multi-source evidence of the interactions among multiple types of objects \cite{nickel2011three, rendle2009learning, kolda2009tensor} as the combined evidence can be naturally represented as tuples. However, most of the tensor methods do not explicitly model the internal graph structure for each type of objects, although some of those methods implicitly leverage such information via graph-based regularization terms in their objective function that encourage similar objects within each graph to share similar latent factors
\cite{narita2012tensor, cai2011graph}.
A major weakness in such tensor methods is the lack of convexity in their models, which leads to ill-posed optimization problems particularly in high-order scenarios. It has also been observed that tensor factorization models suffer from label-sparsity issue, which is typically severe in CGRL. 

Kernel machines have been widely studied for supervised classifiers,
where a kernel matrix corresponds to a similarity graph among a single type of objects.
Multiple kernels can be combined, 
for example,
by taking the tensor product of each individual kernel matrix,
which results in a desired kernel matrix among cross-graph multi-relational tuples. 
The idea has been explored in relational learning combined with SVMs \cite{ben2005kernel}, perceptions \cite{basilico2004unifying} or Gaussian process \cite{yu2008gaussian} for two types of objects and is generalizable to the multi-type scenario of CGRL.
Although being generic,
the complexity of such kernel-based methods grows exponentially in the number of individual kernels (graphs) and the size of each individual graph.  As a result, kernel machines suffer from poor scalability in general.
In addition, kernel machines are purely supervised (not for transductive learning), i.e., they cannot leverage the massive number of available non-observed tuples induced from individual graphs and the latent connections among them.  Those limitations make existing kernel methods less powerful for solving the CGRL problem in large scale and under severely data-sparse conditions.

%The provided graphs, along with their interactions (decomposed into pairs),can be treated as a single heterogeneous network consisting of different vertex and edge types.The notion of heterogeneous network has been used in various link analysis tasks \cite{cai2005mining, sun2013mining, ji2010graph} due to its generality.Unfortunately,algorithms in this family do not directly model interactions that simultaneously involve more than two types of entities.Formulating the problem as a standalone heterogeneous networkalso prevents us from leveraging powerful mathematical tools in spectral graph theory for graph analysis,where most results are only applicable to homogeneous graphs.

In this paper, we propose a novel framework for CGRL which can be characterized as follows:
(i) It uses graph products to map heterogeneous sources of information and the link structures in individual graphs onto a single \emph{homogeneous} graph;
(ii) It provides a convex formulation and approximation of the CGRL problem that ensure robust optimization and efficient computation; and
(iii) It enables transductive learning in the form of label propagation over the induced homogeneous graph so that the massively available non-observed tuples and the latent connections among them can play an important role in effectively addressing the label-sparsity issue. 
%In contrast to tensor methods, our formulation leads to a convex optimization problem that ensures robust training.
%Each training iteration of the proposed method enjoys constant complexity over the size of all graphs.

The proposed framework is most related to \cite{liu2015bipartite},
where the authors formulated graph products for learning the edges of a bipartite graph.
Our new framework is fundamentally different in two aspects.
First, our new formulation and algorithms allow the number of individual graphs to be greater than two, %modeling arbitrary interaction patterns of multiple ($\ge 2$) graphs,
while method in \cite{liu2015bipartite} is only applicable to two graphs.  
Secondly, the
algorithms in \cite{liu2015bipartite} suffer from cubic complexity over the graphs sizes
(quadratic by using a non-convex approximation),
while our new algorithm enjoys both the convexity of the formulation and the low time complexity which is linear over the graph sizes.

The paper is organized as follows:
Section \ref{sec:proposed} shows how cross-graph multi-relations can be embedded into the vertex space of a homogeneous graph.
Section \ref{sec:approximation} describes how efficient label propagation among multi-relations can be carried out in such space with approximation.
We discuss our optimization algorithm in Section \ref{sec:opt}
and provide empirical evaluations over real-world datasets in Section \ref{sec:exp}.

\section{The Proposed Method}
\begin{figure*}[ht]
    \centering
    \resizebox{1.3cm}{!}{$\mathscr{P}\bigg($}
        \begin{minipage}[c]{0.11\linewidth}
            \centering
            $\underbrace{\includegraphics[width=0.9\linewidth]{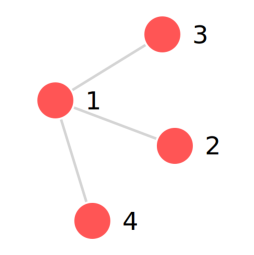}}_{\text{\large $G^{(1)}$}}$
        \end{minipage}%%
        ,
        \begin{minipage}[c]{0.09\linewidth}
            \centering
            \vspace{0.35cm}
            $\underbrace{\includegraphics[width=0.9\linewidth]{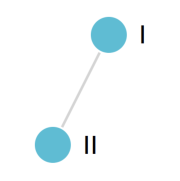}}_{\text{\large $G^{(2)}$}}$
        \end{minipage} 
        ,
        \begin{minipage}[c]{0.11\linewidth}
            \centering
            $\underbrace{\includegraphics[width=0.9\linewidth]{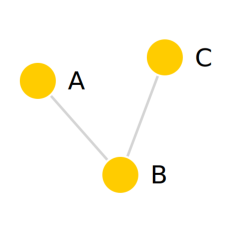} }_{\text{\large $G^{(3)}$}}$
        \end{minipage}%% 
        \
    \resizebox{1.25cm}{!}{$\bigg)=$}
    \begin{minipage}[c]{0.375\linewidth}
        \centering
        \includegraphics[width=0.75\linewidth]{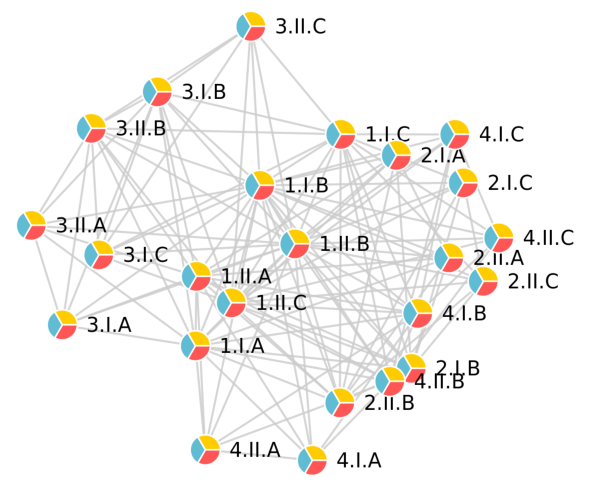} 
    \end{minipage} 
    \caption{Graph product of $G^{(1)}$, $G^{(2)}$ and $G^{(3)}$.
        Each vertex in the resulting graph $\mathscr{P}\big(G^{(1)}$, $G^{(2)}$, $G^{(3)}\big)$ corresponds to a multi-relation across the original graphs. E.g., vertex \texttt{3.II.B} in $\mathscr{P}$ corresponds to multi-relation \texttt{(3,II,B)} across $G^{(1)}$, $G^{(2)}$ and $G^{(3)}$.}
    \label{fig:gp-example}
\end{figure*}

\label{sec:proposed}
We introduce our notation in \ref{sec:notations}
and the notion of graph product (GP) in \ref{sec:gp}.
We then narrow down to a specific GP family with desirable computational properties in \ref{sec:gp},
and finally propose our GP-based optimization objective in \ref{sec:obj}.

\subsection{Notations}
\label{sec:notations}
We are given $J$ heterogeneous graphs
where the $j$-th graph contains $n_j$ vertices and is associated with an adjacency matrix $G^{(j)} \in \mathbb{R}^{n_j \times n_j}$.
We use $i_j$ to index the $i_j$-th vertex of graph $j$,
and use a tuple $(i_1, \dots, i_J)$ to index each multi-relation across the $J$ graphs.
The system predictions over all possible $\prod_{j=1}^J n_j$ multi-relations is summarized in an order-$J$ tensor $f \in \mathbb{R}^{{n_1} \times \dots \times n_J}$,
where $f_{i_1,i_2,\dots,i_J}$ corresponds to the prediction about tuple $(i_1, \dots, i_J)$.

Denote by $\otimes$ the Kronecker (Tensor) product.
We use $\bigotimes_{j=1}^{J} x_j$ (or simply $\bigotimes_j x_j$) as the shorthand for $x_1 \otimes \dots \otimes x_J$.
Denote by $\times_j$ the $j$-mode product between tensors.
We refer the readers to \cite{kolda2009tensor} for a thorough introduction about tensor mode product.

\subsection{Graph Product}
\label{sec:gp}
In a nutshell, graph product (GP)
\footnote{
    While traditional GP only applies to two graphs, we generalize it to the case of multiple graphs
    (Section \ref{sec:sgp}).
}
is a mapping from each cross-graph multi-relation to each vertex in a new graph $\mathscr{P}$,
whose edges encode similarities among the multi-relations (illustrated in Fig.\ \ref{fig:gp-example}).
A desirable property of GP is
it provides a natural reduction from the original multi-relational learning problem over \emph{heterogeneous} information sources (Task \ref{task:1})
to an equivalent graph-based learning problem over a \emph{homogeneous} graph (Task \ref{task:2}).

\begin{task}
    \label{task:1}
    Given $J$ graphs $G^{(1)}, \dots, G^{(J)}$ with a small set of labeled multi-relations
    $\mathcal{O} = \{(i_1, \dots, i_J)\}$,
    predict labels of the unlabeled multi-relations.
\end{task}

\begin{task}
    \label{task:2}
    Given the product graph $\mathscr{P}\big(G^{(1)}, \dots, G^{(J)}\big)$ with a small set of labeled vertices
    $\mathcal{O} = \{(i_1, \dots, i_J)\}$,
    predict labels of its unlabeled vertices.
\end{task}

\subsection{Spectral Graph Product}
\label{sec:sgp}
We define a parametric family of GP operators named the spectral graph product (SGP),
which is of particular interest as
it subsumes the well-known Tensor GP and Cartesian GP (Table \ref{tab:example}),
is well behaved (Theorem \ref{thm:commutation})
and allows efficient optimization routines (Section \ref{sec:approximation}).

Let $\lambda^{(j)}_{i_j}$ and $v^{(j)}_{i_j}$ be the $i_j$-th eigenvalue and eigenvector for the graph $j$,
respectively.
We construct SGP by defining the eigensystem of its adjacency matrix
based on the provided $J$ heterogeneous eigensystems of $G^{(1)},\dots, G^{(J)}$.

\begin{definition}
    \label{def:sgp}
    The SGP of $G^{(1)}, \dots, G^{(J)}$ is a graph consisting of $\prod_j n_j$ vertices,
    with its adjacency matrix $\mathscr{P}_\kappa := \mathscr{P}_\kappa\left( G^{(1)}, \dots, G^{(J)} \right)$ defined by the following eigensystem
    \begin{align}
        \Big\{ 
            \kappa\big(\lambda^{(1)}_{i_1}, \dots, \lambda^{(J)}_{i_J} \big),
            \bigotimes_j {v^{(j)}_{i_j}}
        \Big\}_{i_1, \dots, i_J}
    \end{align}
    where $\kappa$ is a pre-specified nonnegative nondecreasing function over $\lambda_{i_j}^{(j)}, \forall j = 1,2,\dots, J$.
\end{definition}
In other words,
the $(i_1,\dots,i_J)$-th eigenvalue of $\mathscr{P}_\kappa$ is defined by
    coupling the $\lambda^{(1)}_{i_1}, \dots, \lambda^{(J)}_{i_J}$ with function $\kappa$,
    and the $(i_1,\dots,i_J)$-th eigenvector of $\mathscr{P}_\kappa$
    is defined by coupling $v^{(1)}_{i_1}, \dots, v^{(J)}_{i_J}$
    via tensor (outer) product.

\begin{remark}
    If each individual $\big\{v^{(j)}_{i_j}\big\}_{{i_j}=1}^{n_j}$ forms an orthogonal basis
    in $\mathbb{R}^{n_j}$, $\forall j \in 1,\dots,J$,
    then $\big\{\bigotimes_j {v^{(j)}_{i_j}}\big\}_{i_1, \dots, i_J}$ forms an orthogonal basis in $\mathbb{R}^{\prod_{j=1}^J n_j}$.
\end{remark}

In the following example we introduce two special kinds of SGPs,
assuming $J=2$ for brevity.
Higher-order cases are later summarized in Table \ref{tab:example}.

\begin{example}
    Tensor GP defines
    $\kappa(\lambda_{i_1}, \lambda_{i_2}) = \lambda_{i_1} \lambda_{i_2}$,
    and is equivalent to Kronecker product:
    $
        \mathscr{P}_{\text{Tensor}}\big(G^{(1)}, G^{(2)}\big)
        = \sum_{i_1, i_2} (\lambda_{i_1} \lambda_{i_2}) \big(v^{(1)}_{i_1} \otimes v^{(2)}_{i_2}\big) \big(v^{(1)}_{i_1} \otimes v^{(2)}_{i_2}\big)^\top
        \equiv G^{(1)} \otimes G^{(2)}
    $.

    Cartesian GP defines
    $\kappa(\lambda_{i_1}, \lambda_{i_2}) = \lambda_{i_1} + \lambda_{i_2}$,
    and is equivalent to the Kronecker sum:
    $
        \mathscr{P}_{\text{Cartesian}}\big(G^{(1)}, G^{(2)}\big)
        = \sum_{i_1, i_2} (\lambda_{i_1} + \lambda_{i_2}) \big(v^{(1)}_{i_1} \otimes v^{(2)}_{i_2}\big) \big(v^{(1)}_{i_1} \otimes v^{(2)}_{i_2}\big)^\top
        \equiv G^{(1)} \oplus G^{(2)}
    $.
        \begin{table}[ht]
        \small
            \centering
    \resizebox{\columnwidth}{!}{
            \begin{tabular}{r|cc}
                \toprule
                SGP Type & $\kappa\big( \lambda_{i_1}^{(1)}, \cdots, \lambda_{i_J}^{(J)} \big)$ & $\left[\mathscr{P}_\kappa\right]_{(i_1, \cdots i_J), (i'_1, \cdots i'_J)}$ \\
                \midrule
                Tensor & $\prod_j \lambda_{i_j}^{(j)}$ & $\prod_j G^{(j)}_{i_j, i'_j}$ \\
                Cartesian & $\sum_j \lambda_{i_j}^{(j)}$ & $\sum_j G^{(j)}_{i_j, i'_j} \prod_{j'\not=j} \delta_{i_{j'} = i'_{j'}}$ \\
                \bottomrule
            \end{tabular}
        }
            \caption{Tensor GP and Cartesian GP in higher-orders.}
            \label{tab:example}
        \end{table}
\end{example}

While Tensor GP and Cartesian GP provide mechanisms to associate multiple graphs
in a multiplicative/additive manner,
more complex cross-graph association patterns can be modeled by specifying $\kappa$.
E.g.,
$
    \kappa\left( \lambda_{i_1}, \lambda_{i_2}, \lambda_{i_3} \right)
    = \lambda_{i_1} \lambda_{i_2} + \lambda_{i_2} \lambda_{i_3} + \lambda_{i_3} \lambda_{i_1}
$
indicates pairwise associations are allowed among three graphs,
but no triple-wise association is allowed as term $\lambda_{i_1} \lambda_{i_2} \lambda_{i_3}$ is not involved.
Including higher order polynomials in $\kappa$
amounts to incorporating higher-order associations among the graphs,
which can be achieved by simply exponentiating $\kappa$.

Since what the product graph $\mathscr{P}$ offers is essentially a similarity measure among multi-relations,
shuffling the order of input graphs $G^{(1)}, \dots, G^{(J)}$ should not affect $\mathscr{P}$'s topological structure.
For SGP, this property is guaranteed by the following theorem:
\begin{theorem}[The Commutative Property]
    \label{thm:commutation}
    SGP is commutative (up to graph isomorphism) if $\kappa$ is commutative.
\end{theorem}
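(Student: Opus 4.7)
The plan is to exhibit an explicit graph isomorphism induced by the obvious relabeling of vertices. Let $\pi$ be any permutation of $\{1, \dots, J\}$ that witnesses a reordering of the input graphs, and define the bijection $\sigma : [n_1] \times \cdots \times [n_J] \to [n_{\pi(1)}] \times \cdots \times [n_{\pi(J)}]$ by $\sigma(i_1, \dots, i_J) = (i_{\pi(1)}, \dots, i_{\pi(J)})$. I would then prove that $\sigma$ is a graph isomorphism between $\mathscr{P}_\kappa(G^{(1)}, \dots, G^{(J)})$ and $\mathscr{P}_\kappa(G^{(\pi(1))}, \dots, G^{(\pi(J))})$ by comparing their spectral decompositions termwise.

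Concretely, I would first invoke Definition \ref{def:sgp} to write both adjacency matrices in the form $\sum \kappa(\cdot)\, (\bigotimes_j v^{(j)}_{i_j})(\bigotimes_j v^{(j)}_{i_j})^\top$. The eigenvalue in the permuted SGP indexed by $(i_{\pi(1)}, \dots, i_{\pi(J)})$ is $\kappa(\lambda^{(\pi(1))}_{i_{\pi(1)}}, \dots, \lambda^{(\pi(J))}_{i_{\pi(J)}})$, which equals $\kappa(\lambda^{(1)}_{i_1}, \dots, \lambda^{(J)}_{i_J})$ by the assumed commutativity of $\kappa$. So the eigenvalue side of the comparison is immediate, and the substantive content is verifying that the factor-reordered eigenvectors $\bigotimes_j v^{(\pi(j))}_{i_{\pi(j)}}$ are related to the original eigenvectors $\bigotimes_j v^{(j)}_{i_j}$ by a single permutation matrix $P_\sigma$ on $\mathbb{R}^{\prod_j n_j}$ that implements the relabeling $\sigma$ on coordinates.

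For this eigenvector step, I would use the standard identity for Kronecker products: permuting the order of the factors acts on the tensor space by a coordinate permutation (the ``perfect shuffle'' or commutation matrix). At the level of elementary tensors, the $(i_1, \dots, i_J)$-th coordinate of $\bigotimes_j v^{(j)}_{i_j}$ is a product of scalars indexed in a specific order, and reshuffling the factors just reshuffles which multi-index slot each scalar is assigned to, which is exactly the action of $P_\sigma$. Combining this with the eigenvalue identification and summing over $(i_1, \dots, i_J)$ yields $\mathscr{P}_\kappa(G^{(\pi(1))}, \dots, G^{(\pi(J))}) = P_\sigma\, \mathscr{P}_\kappa(G^{(1)}, \dots, G^{(J)})\, P_\sigma^\top$, which is precisely graph isomorphism.

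The main obstacle I anticipate is the bookkeeping in the eigenvector step, since the factor dimensions $n_j$ may all differ and $\pi$ is an arbitrary permutation rather than a single adjacent transposition. Care is needed to verify that the coordinate permutation on $\mathbb{R}^{\prod_j n_j}$ really is $P_\sigma$ (and not its inverse), and that the same $P_\sigma$ simultaneously diagonalizes all the summands — but once a consistent indexing convention is fixed, this reduces to a direct calculation on rank-one tensors. Everything else (commutativity of $\kappa$, the fact that $\{\bigotimes_j v^{(j)}_{i_j}\}$ forms an orthogonal basis, and the bijectivity of $\sigma$) is already in hand from the preceding remark and hypothesis.
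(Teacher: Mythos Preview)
Your proposal is sound, and in fact the paper simply writes ``We omit the proof'' immediately after stating the theorem, so there is no argument to compare against. Your route via the commutation (perfect-shuffle) permutation matrix is the standard way to establish this: commutativity of $\kappa$ matches the eigenvalues, the shuffle matrix $P_\sigma$ matches the Kronecker-product eigenvectors, and conjugation by $P_\sigma$ gives the isomorphism. The only caution is the one you already flagged, namely keeping the indexing convention straight so that $P_\sigma$ (rather than $P_\sigma^{-1}$) appears consistently; since $P_\sigma$ is orthogonal this cannot affect correctness, only presentation.
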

We omit the proof.
The theorem suggests the SGP family is well-behaved as long as $\kappa$ is commutative,
which is true for both Tensor and Cartesian GPs as both multiplication and addition operations are order-insensitive.

\subsection{Optimization Objective}
\label{sec:obj}
It is often more convenient to equivalently write tensor $f$ as a multi-linear map.
E.g., when $J = 2$,
tensor (matrix) $f \in \mathbb{R}^{n_1 \times n_2}$ defines a bilinear map
from $\mathbb{R}^{n_1} \times \mathbb{R}^{n_2}$ to $\mathbb{R}$ via $f(x_1, x_2) := x_1^\top f x_2$
and we have $f_{i_1,i_2} = f(e_{i_1}, e_{i_2})$.
Such equivalence is analogous to high-order cases
where $f$ defines a multi-linear map
from $\mathbb{R}^{n_1} \times \dots \times \mathbb{R}^{n_J}$ to $\mathbb{R}$.

To carry out transductive learning over $\mathscr{P}_\kappa$ (Task \ref{task:2}),
we inject the structure of the product graph into $f$ via a Gaussian random fields prior \cite{zhu2003semi}.
The negative log-likelihood of the prior $-\log p\left( f \mid \mathscr{P}_\kappa \right)$ is the same (up to constant) as the following squared semi-norm
\begin{align}
    \label{eq:semi-norm}
    \|f\|_{\mathscr{P}_\kappa}^2
    &=
    vec(f)^\top \mathscr{P}^{-1}_\kappa vec(f) \\
    \label{eq:semi-norm-sum}
    &=
    \sum_{i_1, i_2, \dots, i_J} 
    \frac{f\big( v^{(1)}_{i_1}, \dots, v^{(J)}_{i_J} \big)^2}{\kappa\big( \lambda_{i_1}^{(1)}, \dots, \lambda_{i_J}^{(J)} \big)}
\end{align}
Our optimization objective is therefore defined as
\begin{equation} \label{eq:obj}
    \min_{f \in \mathbb{R}^{n_1 \times \dots \times n_J}} \enskip \ell_{\mathcal{O}}\left(f\right)
    +
    \frac{\gamma}{2} \|f\|_{\mathscr{P}_\kappa}^2
\end{equation}
where $\ell_{\mathcal{O}}(\cdot)$ is a loss function to be defined later (Section \ref{sec:opt}),
$\mathcal{O}$ is the set of training tuples,
and $\gamma$ is a tuning parameter controlling the strength of graph regularization.

\section{Convex Approximation}
\label{sec:approximation}

\begin{figure*}
    \centering
    \includegraphics[width=0.85\linewidth]{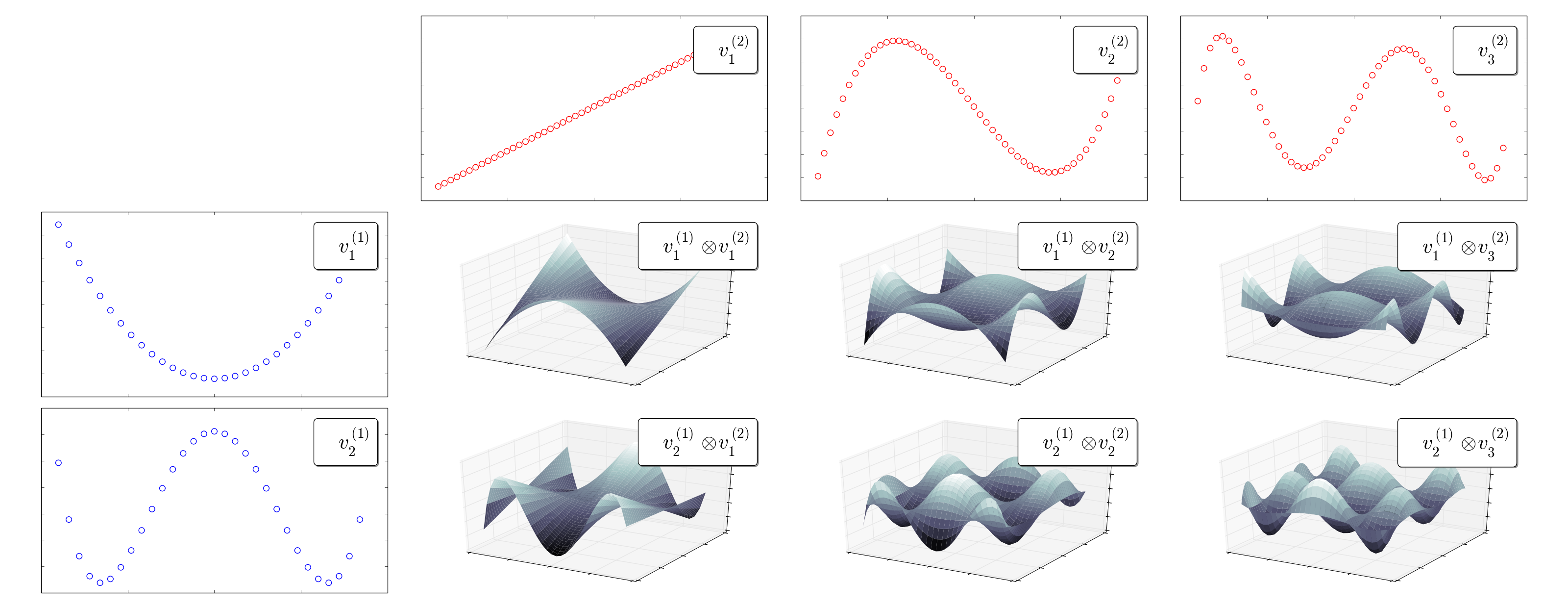}
    \caption{
        An illustration of the eigenvectors of $G^{(1)}$, $G^{(2)}$ and $\mathscr{P}\big( G^{(1)}, G^{(2)} \big)$.
        We plot leading nontrivial eigenvectors of $G^{(1)}$ and $G^{(2)}$ in blue and red curves, respectively,
        and plot the induced leading nontrivial eigenvectors of $\mathscr{P}\big( G^{(1)}, G^{(2)} \big)$ in 3D.
        If $G^{(1)}$ and $G^{(2)}$ are symmetrically normalized,
        their eigenvectors (corresponding to eigenvectors of the graph Laplacian) will be ordered by smoothness w.r.t.\ the graph structures.
        As a result, eigenvectors of $\mathscr{P}\big( G^{(1)}, G^{(2)} \big)$ will also be ordered by smoothness.
    }
    \label{fig:smooth}
\end{figure*}

The computational bottleneck for optimization \eqref{eq:obj} lies in evaluating $\|f\|_{\mathscr{P}_\kappa}^2$ and its first-order derivative,
due to the extremely large size of $\mathscr{P}_\kappa$.
In section \ref{sec:exact},
we first identify the computation bottleneck of using the exact formulation,
based on which we propose our convex approximation scheme in \ref{sec:tucker}
that reduces the time complexity of evaluating the semi-norm $\|f\|^2_{\mathscr{P}_\kappa}$ from $O\left(\big(\sum_j n_j\big) \big(\prod_j n_j\big) \right)$ to $O\big(\prod_j d_j\big)$,
where $d_j \ll n_j$ for $j = 1,\dots,J$.
%We are going to reduce the complexity for each stochastic gradient update to $O\left( 1 \right)$ over $\left\{n_j\right\}_j$.
\subsection{Complexity of the Exact Formulation} \label{sec:exact}
The brute-force evaluation of $\|f\|^2_{\mathscr{P}_\kappa}$ according to \eqref{eq:semi-norm-sum} costs 
$O\big( \big(\prod_j n_j\big)^2 \big)$,
as one has to evaluate $O\big( \prod_j n_j \big)$ terms inside the summation
where each term costs $O\big( \prod_j n_j \big)$.
However, redundancies exist and
the minimum complexity for the exact evaluation is given as follows
\begin{proposition}
    The exact evaluation of semi-norm $\|f\|_{\mathscr{P}_\kappa}$ takes $O\big(\big(\sum_j n_j\big) \big(\prod_j n_j\big) \big)$ flops.
\end{proposition}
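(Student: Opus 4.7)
The plan is to exhibit an algorithm that achieves the claimed bound and argue that its cost is dominated by a sequence of mode products. The key observation is that each scalar appearing in the summation \eqref{eq:semi-norm-sum} is one entry of a transformed tensor. Specifically, letting $V^{(j)} \in \mathbb{R}^{n_j \times n_j}$ denote the matrix whose columns are the eigenvectors $v^{(j)}_1, \dots, v^{(j)}_{n_j}$, one has
\begin{equation*}
    f\bigl( v^{(1)}_{i_1}, \dots, v^{(J)}_{i_J} \bigr)
    \;=\; \tilde f_{i_1, \dots, i_J},
    \qquad
    \tilde f \;:=\; f \times_1 V^{(1)\top} \times_2 \cdots \times_J V^{(J)\top}.
\end{equation*}

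First I would compute $\tilde f$ by applying the $J$ mode-products sequentially. Applying $\times_j V^{(j)\top}$ to a tensor of shape $n_1 \times \dots \times n_J$ requires taking $\prod_k n_k$ inner products, each of length $n_j$, so it costs $O\bigl(n_j \prod_k n_k\bigr)$ flops and returns a tensor of the same shape. Summing across the $J$ modes yields $\sum_j n_j \prod_k n_k = \bigl(\sum_j n_j\bigr)\bigl(\prod_j n_j\bigr)$ flops in total. Then I would enumerate the $\prod_j n_j$ index tuples $(i_1, \dots, i_J)$, compute $\kappa\bigl( \lambda^{(1)}_{i_1}, \dots, \lambda^{(J)}_{i_J} \bigr)$ (which requires $O(J)$ work per tuple, absorbed into $\sum_j n_j$), and accumulate $\tilde f_{i_1,\dots,i_J}^2 / \kappa(\cdot)$. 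This post-processing contributes only $O\bigl(\prod_j n_j\bigr)$ flops and is dominated by the mode-product cost.

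Adding the two contributions gives the claimed $O\bigl(\bigl(\sum_j n_j\bigr)\bigl(\prod_j n_j\bigr)\bigr)$ bound. The main thing to verify carefully is the cost of a single mode product: a naive implementation that first forms $V^{(j)\top}$ explicitly and then applies it to each mode-$j$ fiber of length $n_j$ achieves exactly $n_j^2 \prod_{k\neq j} n_k = n_j \prod_k n_k$ flops, and reshaping/unfolding the tensor is linear in its size and hence does not alter the asymptotics. The only subtle point, and the mild obstacle, is to be sure the order in which the $J$ mode products are applied does not matter: since each product preserves the tensor shape $n_1 \times \dots \times n_J$, the per-mode cost $n_j \prod_k n_k$ is independent of the ordering, so the total bound holds for any schedule. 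Correctness follows from the standard identity $(A_1 \otimes \cdots \otimes A_J)\,\mathrm{vec}(f) = \mathrm{vec}\bigl(f \times_1 A_1 \times_2 \cdots \times_J A_J\bigr)$ applied to $A_j = V^{(j)\top}$, which identifies the entries of $\tilde f$ with the multilinear evaluations $f(v^{(1)}_{i_1}, \dots, v^{(J)}_{i_J})$ appearing in \eqref{eq:semi-norm-sum}.
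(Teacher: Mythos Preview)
Your proposal is correct and follows essentially the same approach as the paper: precompute the tensor of numerators $\bigl[f(v^{(1)}_{i_1},\dots,v^{(J)}_{i_J})\bigr]_{i_1,\dots,i_J}$ via the sequence of mode products $f \times_1 V^{(1)} \times_2 \cdots \times_J V^{(J)}$, note that the $j$-th mode product costs $O\bigl(n_j \prod_k n_k\bigr)$, and sum over $j$. Your write-up is in fact more thorough than the paper's, which omits the post-processing cost, the ordering observation, and the correctness identity you supply.
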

\begin{proof}
    Notice that the collection of all numerators in \eqref{eq:semi-norm-sum}, namely $\big[f\big( v^{(1)}_{i_1}, \dots, v^{(J)}_{i_J} \big)\big]_{i_1, \cdots, i_J}$,
    is a tensor in $\mathbb{R}^{n_1 \times \dots \times n_J}$ that can be precomputed via
    \begin{equation} \label{eq:multiply}
        \big(\big(f \times_1 V^{(1)}\big) \times_2 V^{(2)}\big) \dots \times_J V^{(J)}
    \end{equation}
    where $\times_j$ stands for the $j$-mode product between a tensor in $\mathbb{R}^{n_1 \times \dots \times n_j \times \dots \times n_J}$
    and $V^{(j)} \in \mathbb{R}^{n_j \times n_j}$.
    The conclusion follows as the $j$-th mode product in \eqref{eq:multiply} takes $O\big( n_j \prod_j n_j \big)$ flops,
    and one has to do this for each $j = 1,\dots,J$.
    When $J=2$,
    \eqref{eq:multiply} reduces to the multiplication of three matrices ${V^{(1)}}^\top f V^{(2)}$ at the complexity of $O\left( (n_1 + n_2)n_1n_2 \right)$.
\end{proof}

\subsection{Approximation via Tucker Form}
\label{sec:tucker}
Equation \eqref{eq:multiply} implies the key for complexity reduction is to reduce the cost of the $j$-mode multiplications $\cdot \times_j V^{(j)}$.
Such multiplication costs $O\big( n_j \prod_j n_j \big)$ in general, but can be carried out more efficiently if $f$ is structured.

Our solution is twofold:
First, we include only the top-$d_j$ eigenvectors in $V^{(j)}$ for each graph $G^{(i)}$,
where $d_j \ll n_j$.
Hence each $V^{(j)}$ becomes a thin matrix in $\mathbb{R}^{n_j \times d_j}$.
Second,
we restrict tensor $f$ to be within the linear span of the top $\prod_{j=1}^J d_j$ eigenvectors of the product graph $\mathscr{P}_\kappa$
\begin{align} 
    f &= \sum_{k_1, \cdots, k_J = 1}^{d_1, \cdots, d_J} \alpha_{k_1, \cdots, k_J} \bigotimes_{j} v^{(j)}_{k_j} \label{eq:tucker} \\
    &= \alpha \times_1 V^{(1)} \times_2 V^{(2)} \times_3 \dots \times_J V^{(J)} 
\end{align}
The combination coefficients $\alpha \in \mathbb{R}^{d_1 \times \cdots \times d_J}$ is
known as the core tensor of Tucker decomposition.
In the case where $J=2$,
the above is equivalent to saying $f \in \mathbb{R}^{n_1 \times n_2}$
is a low-rank matrix parametrized by $\alpha \in \mathbb{R}^{d_1 \times d_2}$
such that $f = \sum_{k_1, k_2} \alpha_{k_1, k_2} v^{(1)}_{k_1} {v^{(2)}_{k_2}}^\top = V^{(1)} \alpha {V^{(2)}}^\top$.

Combining \eqref{eq:tucker} with the orthogonality property of eigenvectors
leads to the fact that $f\big( v_{k_1}^{(1)}, \dots, v_{k_J}^{(J)} \big) = \alpha_{k_1, \cdots, k_J}$.
To see this for $J=2$, notice
$f\big( v_{k_1}^{(1)}, v_{k_2}^{(2)} \big) = {v_{k_1}^{(1)}}^\top f  v_{k_1}^{(2)} = {v_{k_1}^{(1)}}^\top  V^{(1)} \alpha {V^{(2)}}^\top v_{k_1}^{(2)} = e_{k_1}^\top \alpha e_{k_2} = \alpha_{k_1,k_2}$.
Therefore the semi-norm in \eqref{eq:semi-norm} can be simplified as
\begin{equation}
    \label{eq:reg}
    \|f\|_{\mathscr{P}_\kappa}^2
    =
    \|\alpha\|_{\mathscr{P}_\kappa}^2
    =
    \sum_{k_1, \dots, k_J=1}^{d_1, \cdots, d_J} 
    \frac{\alpha^2_{k_1, \cdots, k_J}}{\kappa\big( \lambda_{k_1}^{(1)}, \dots, \lambda_{k_J}^{(J)} \big)}
\end{equation}

Comparing \eqref{eq:reg} with \eqref{eq:semi-norm-sum},
the number of inside-summation terms is reduced from $O\big( \prod_j n_j \big)$ to $O\big( \prod_j d_j \big)$ where $d_j \ll n_j$.
In addition, the cost for evaluating each term inside summation is reduced from $O\big( \prod_j n_j \big)$ to $O(1)$.

Denote by $V^{(j)}_{i_j} \in \mathbb{R}^{d_j}$ the $i_j$-th row of $V^{(j)}$,
we obtain the following optimization by replacing $f$ with $\alpha$ in \eqref{eq:obj}
\begin{equation} \label{eq:tucker-obj}
    \begin{aligned}
        \min_{\alpha \in \mathbb{R}^{d_1 \times \dots \times d_J}}
        &\ell_\mathcal{O} \left(f\right)
        + \frac{\gamma}{2} \|\alpha\|_{\mathscr{P}_\kappa}^2 \\
        \text{s.t.}\quad\enskip\ &f = \alpha \times_1 V^{(1)} \times_2 \dots \times_J V^{(J)}
    \end{aligned}
\end{equation}
Optimization above has intuitive interpretations.
In principle,
it is natural to emphasis bases in $f$ that are ``smooth'' w.r.t.\ the manifold structure of $\mathscr{P}_\kappa$,
and de-emphasis those that are ``nonsmooth'' in order to obtain a parsimonious hypothesis with strong generalization ability.
We claim this is exactly the role of regularizer \eqref{eq:reg}.
To see this,
note any nonsmooth basis $\bigotimes_j v_{k_j}^{(j)}$ of $\mathscr{P}_\kappa$ is likely to be associated with small a eigenvalue $\kappa\big( \lambda^{(1)}_{k_1},\dots, \lambda^{(J)}_{k_J}\big)$
(illustrated in Fig.\ \ref{fig:smooth}).
The conclusion follows by noticing that
$\alpha_{k_1,\dots,k_J}$ is essentially the activation strength of $\bigotimes_j v_{k_j}^{(j)}$ in $f$ (implied by \eqref{eq:tucker}),
and that \eqref{eq:reg} is going to give any $\alpha_{k_1,\dots,k_J}$ associated with a small $\kappa\big( \lambda^{(1)}_{k_1},\dots, \lambda^{(J)}_{k_J}\big)$ a stronger penalty.

\eqref{eq:tucker-obj} is a convex optimization problem over $\alpha$ with any convex $\ell_\mathcal{O}(\cdot)$.
Spectral approximation techniques for graph-based learning has been found successful in standard classification tasks \cite{fergus2009semi},
which are special cases under our framework when $J=1$.
We introduce this technique for multi-relational learning,
which is particularly desirable as the complexity reduction will be much more significant for high-order cases $(J >= 2)$.

While $f$ in \eqref{eq:tucker} is assumed to be in the Tucker form,
other low-rank tensor representation schemes are potentially applicable.
E.g.,
the Candecomp/Parafac (CP) form that further restricts $\alpha$ to be diagonal,
which is more aggressive but substantially less expressive.
The Tensor-Train decomposition \cite{oseledets2011tensor}
offers an alternative representation scheme in the middle of Tucker and CP,
but the resulting optimization problem will suffer from non-convexity.

%\label{sec:pc}
%A more aggressive approximation scheme is to further restrict the core tensor $\alpha$ to be a diagonal tensor.
%This corresponds to the CP decomposition of $f$ where
%\begin{align}
    %f = \sum_{k=1}^d \mathrm{diag}(\alpha)_k \bigotimes_j v_k^{(j)}
%\end{align}
%where $d = d_1 = \dots d_J$.
%In this case, the semi-norm can be recast as
%$
    %\|f\|_{\mathscr{P}_\kappa}^2
    %=
    %\|\alpha\|_{\mathscr{P}_\kappa}^2
    %=
    %\sum_{k=1}^{d} 
    %\frac{\mathrm{diag}(\alpha)^2_{k}}{\kappa\big( \lambda_{k}^{(1)}, \dots, \lambda_{k}^{(J)} \big)}
%$,
%and our optimization objective becomes
%\begin{equation} \label{eq:cp-obj}
    %\min_{\alpha} \enskip 
    %\frac{1}{|\mathcal{O}|} \sum_{(i_1, \dots, i_J) \in \mathcal{O}} \ell_{i_1, \dots, i_J}\Big[\big\langle\alpha, \bigodot_j V^{(j)}_{i_j}\big\rangle\Big]
    %+
    %\frac{\gamma}{2} \|\alpha\|_{\mathscr{P}_\kappa}^2
%\end{equation}
%Other approximation schemes of $f$

\section{Optimization}
\label{sec:opt}
%We describe the loss function used for our algorithm in \ref{sec:loss},
%based on which we introduce our optimization solver in \ref{sec:sgd}.

%\subsection{Loss Function}
%\label{sec:loss}
Let $\left( x \right)_+ = \max\left( 0, 1-x \right)$ be the shorthand for hinge loss.
We define $\ell_\mathcal{O}(f)$ to be the ranking $\ell_2$-hinge loss
\begin{equation} \label{eq:loss}
    \ell_\mathcal{O}(f)
    =
    \frac{\sum_{
    \text{
        \tiny
        $
        \begin{array}{c}
            (i_1, \dots, i_J) \in \mathcal{O} \\
            (i_1', \dots, i_J') \in \bar{\mathcal{O}}
        \end{array}
        $
}} \Big( f_{i_1 \dots i_J} - f_{i_1' \dots i_J'} \Big)^2_+}{|\mathcal{O}\times\bar{\mathcal{O}}|}
\end{equation}
where $\bar{\mathcal{O}}$ is the complement of $\mathcal{O}$ w.r.t.\ all possible multi-relations.
Eq.\ \eqref{eq:loss} encourages the valid tuples in our training set $\mathcal{O}$ to be ranked higher than those corrupted ones in $\bar{\mathcal{O}}$,
and is known to be a surrogate of AUC.

%\subsection{The Stochastic Solver}
%\label{sec:sgd}

We use stochastic gradient descent for optimization as $|\mathcal{O}|$ is usually large.
In each iteration,
a random valid multirelation $(i_1, \dots, i_J)$ is uniformly drawn from $\mathcal{O}$,
a random corrupted multirelation $(i'_1, \dots, i'_J)$ is uniformly drawn from $\bar{\mathcal{O}}$.
The associated noisy gradient is computed as
    %\begin{aligned}
        %\nabla_\alpha &= \ell'_{i_1, \dots, i_J}\left[\alpha\big( V^{(1)}_{i_1}, \cdots, V^{(J)}_{i_J}\big)\right] \cdot \bigotimes_{j} V^{(j)}_{i_j} \\
        %&\quad + \gamma \alpha \oslash \kappa
    %\end{aligned}
\begin{align}
    \nabla_\alpha &= \frac{\partial \ell_\mathcal{O}}{\partial f} \left(
        \frac{\partial f_{i_1, \dots, i_J}}{\partial \alpha}
        -
        \frac{\partial f_{i'_1, \dots, i'_J}}{\partial \alpha}
    \right)
    + \gamma \alpha \oslash \kappa \label{eq:sgd-alpha}
\end{align}
where we abuse the notation by defining $\kappa \in \mathbb{R}^{d_1 \times \dots \times d_J}$,
$\kappa_{k_1,\dots,k_J} := \kappa\big(\lambda^{(1)}_{k_1}, \dots, \lambda^{(J)}_{k_J}\big)$;
$\oslash$ is the element-wise division between tensors.
The gradient w.r.t.\ $\alpha$ in \eqref{eq:sgd-alpha} is
\begin{align}
    \frac{\partial f_{i_1, \dots, i_J}}{\partial \alpha}
    &= \frac{\partial \big(\alpha \times_1 V^{(1)}_{i_1} \times_2 \dots \times_J V^{(J)}_{i_J}\big)}{\partial \alpha} \\
    &= \bigotimes_j V^{(j)}_{i_j} \quad \in \mathbb{R}^{d_1 \times \dots d_J} \label{eq:coupled}
\end{align}
Each SGD iteration costs $O\big( \prod_j d_j \big)$ flops,
which is independent from $n_1, n_2, \dots, n_J$.
After obtaining the solution $\hat{\alpha}(\kappa)$ of optimization \eqref{eq:tucker-obj} for any given SGP $\mathscr{P}_\kappa$,
our final predictions in $\hat{f}(\kappa)$ can be recovered via \eqref{eq:tucker}.
%and apply it to the test set $\mathcal{T}$.
%The time complexity for the testing phase is $O\big(|\mathcal{T}| \prod_j d_j \big)$.

Following \cite{duchi2011adaptive}, we allow adaptive step sizes for each element in $\alpha$.
That is, in the $t$-th iteration we use
$
    \eta^{(t)}_{k_1, \dots, k_J} = \eta_0 \Big/ \Big[\sum_{\tau=1}^t { {\nabla_\alpha}_{k_1, \dots, k_J}^{(\tau)}}^2\Big]^{\frac{1}{2}}
$
as the step size for $\alpha_{k_1, \dots, k_J}$,
where $\big\{{\nabla_\alpha}_{k_1, \dots, k_J}^{(\tau)}\big\}_{\tau=0}^t$ are historical gradients associated with $\alpha_{k_1, \dots, k_J}$
and $\eta_0$ is the initial step size (set to be $1$).
The strategy is particularly efficient with highly redundant gradients,
which is our case where the gradient is a regularized rank-2 tensor, according to \eqref{eq:sgd-alpha} and \eqref{eq:coupled}.

In practice (especially for large $J$),
the computation cost of tensor operations involving $\bigotimes_{j=1}^J V^{(j)}_{i_j} \in \mathbb{R}^{d_1, \dots, d_J}$
is not ignorable even if $d_1, d_2, \dots, d_J$ are small.
Fortunately, such medium-sized tensor operations in our algorithm are highly parallelable over GPU. 
The pseudocode for our optimization algorithm is summarized in Alg.\ \ref{alg:code}.

\begin{algorithm}[t]
    \label{alg:code}
    \caption{\small{Transductive Learning over Product Graph (TOP)}}
    \ForEach{$j \in 1, \dots, J$}
    {
        $\big\{ v^{(j)}_k, \lambda^{(j)}_k \big\}_{k=1}^{d_j} \gets \textsc{Approx\_Eigen}(G^{(j)})$\;
    }
    \ForEach{$\left( k_1, \dots, k_J \right) \in [d_1] \times \dots [d_J]$}
    {
        $\kappa_{k_1, \dots, k_J} \gets \kappa(\lambda^{(1)}_{k_1}, \dots, \lambda^{(J)}_{k_J})$\;
    }
    $\alpha \gets 0$, $Z \gets 0$\;
    \While{not converge}
    {
        $(i_1, \dots, i_J) \stackrel{uni}{\sim} \mathcal{O}$,\enskip\,$(i'_1, \dots, i'_J) \stackrel{uni}{\sim} \bar{\mathcal{O}}$\;
        $f_{i_1,\dots,i_J} \gets \alpha \times_1 V_{i_1}^{(1)} \times_2 \dots \times_J V_{i_J}^{(J)}$\;
        $f_{i'_1,\dots,i'_J} \gets \alpha \times_1 V_{i'_1}^{(1)} \times_2 \dots \times_J V_{i'_J}^{(J)}$\;
        $\delta = f_{i_1,\dots,i_J} - f_{i'_1,\dots,i'_J}$\;
        \If{$\delta < 1$}
        {
            $\nabla_\alpha \gets 2(\delta-1) \Big(\bigotimes_{j} V^{(j)}_{i_j} - \bigotimes_{j} V^{(j)}_{i'_j}\Big) + \gamma \alpha \oslash \kappa$\;
        }
        \Else
        {
            $\nabla_\alpha \gets \gamma \alpha \oslash \kappa$\;
        }
        $Z \gets Z + \nabla_\alpha^{\odot 2}$\;
        $\alpha \gets \alpha - \eta_0 Z^{\odot - \frac{1}{2}} \odot \nabla_\alpha$\;
    }
    %\ForEach{$(i_1, \dots i_J) \in \mathcal{T}$}
    %{
        %$\hat{f}_{i_1, \dots i_J} \gets f\big(v_{i_1}^{(1)}, \dots, v_{i_J}^{(J)}\big) = \alpha\big( V_{i_1}^{(1)}, \dots, V_{i_J}^{(J)}\big)$\;
    %}
    %\Return $\big\{ \hat{f}_{i_1, \dots i_J} \big\}_{(i_1, \dots i_J) \in \mathcal{T}}$
    \Return $\alpha$
\end{algorithm}

\section{Experiments}
\label{sec:exp}

\subsection{Datasets}
We evaluate our method on real-world data in two different domains:
the Enzyme dataset \cite{yamanishi2008prediction} for compound-protein interaction 
and the DBLP dataset of scientific publication records.
Fig.\ \ref{fig:schema} illustrates their heterogeneous objects and relational structures.
\begin{figure*}[ht]
    \centering
    \includegraphics[width=0.25\linewidth]{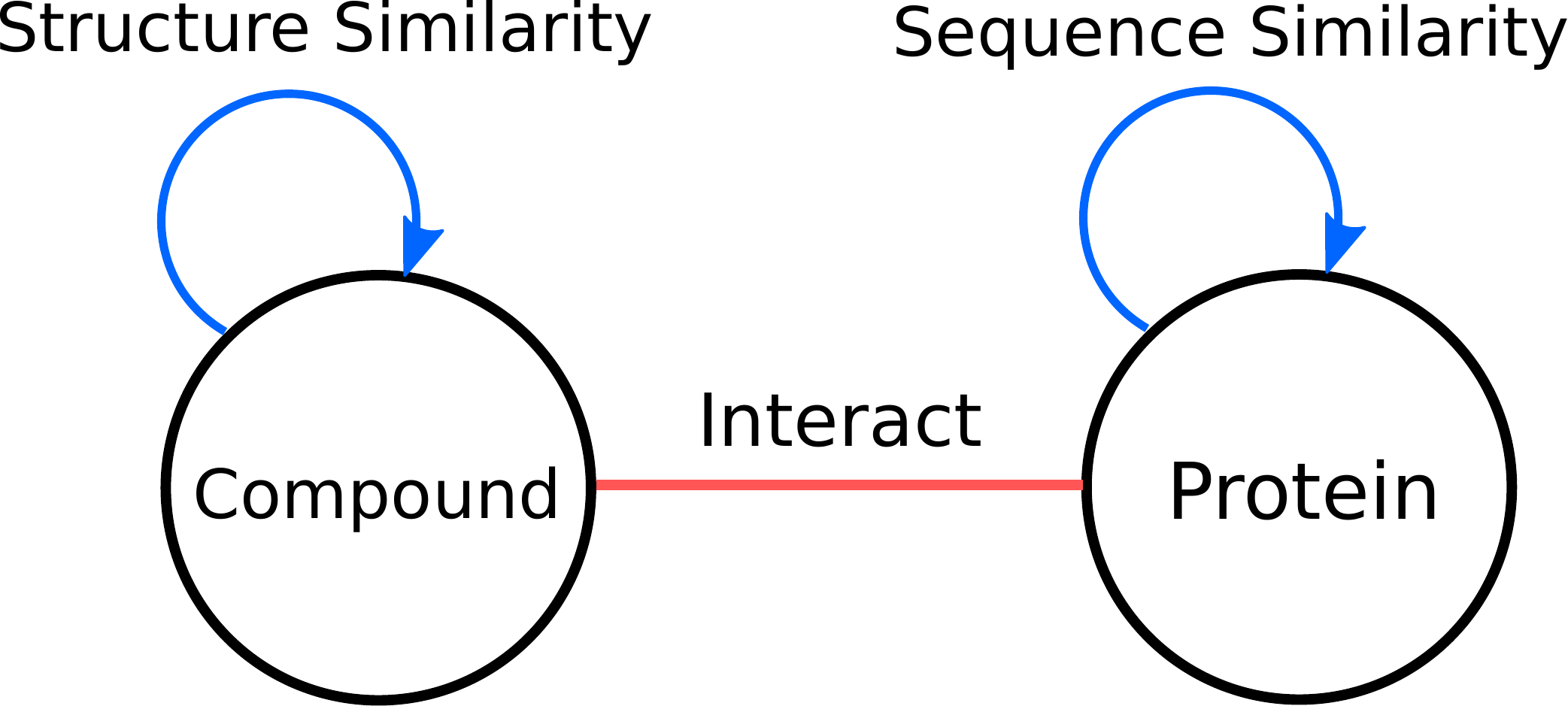}
    \hspace{4em}
    \includegraphics[width=0.32\linewidth]{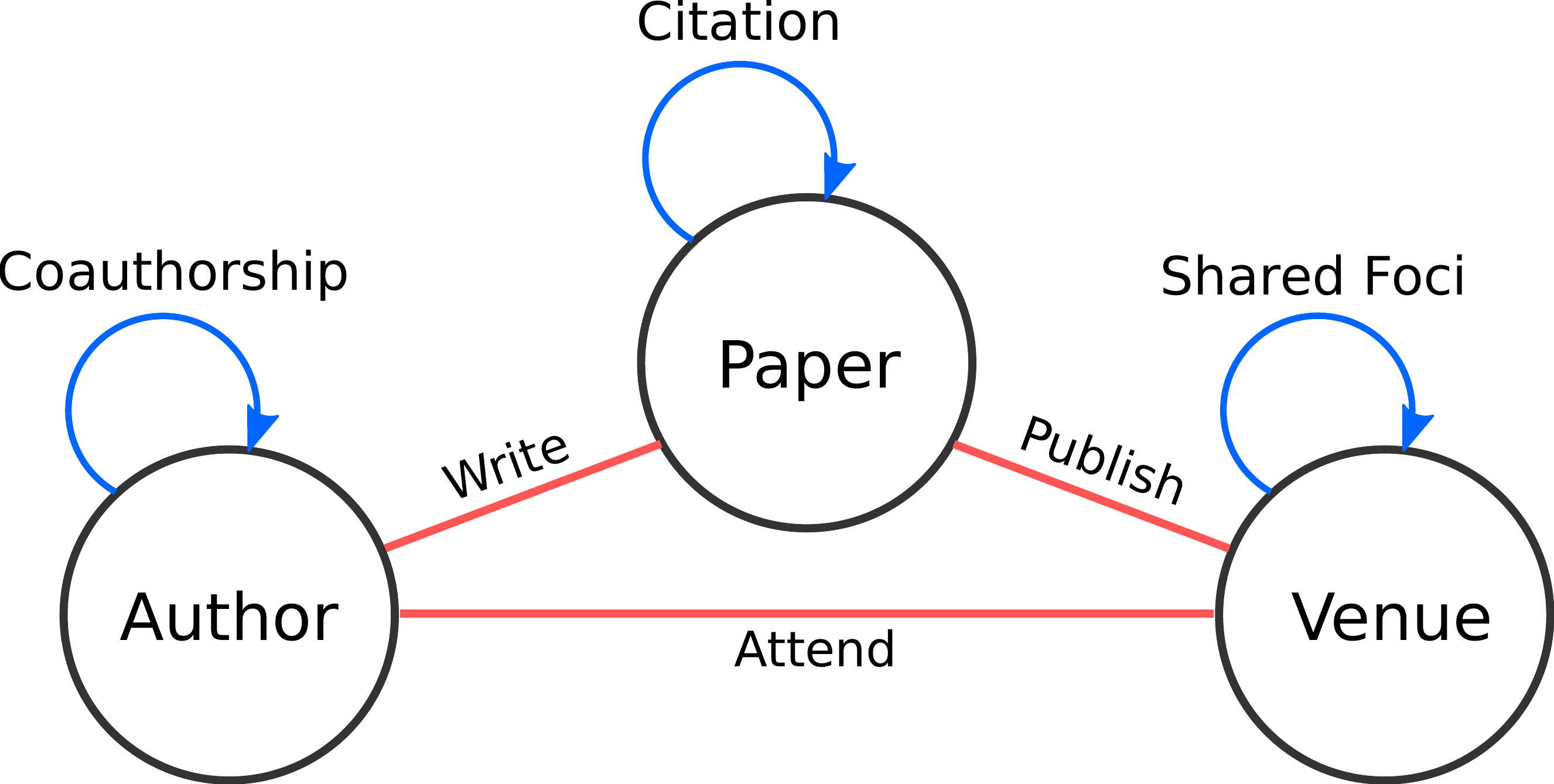}
    \caption{The heterogeneous types of objects (the circles) and the relational structures in the Enzyme (left) and DBLP (right) data sets.
        The blue edges represent the within-graph relations and the red edges represent the cross-graph interactions. The corresponding tuples in Enzyme is in the form of \texttt{(Compound,Protein)},
    and in DBLP is in the form of \texttt{(Author,Paper,Venue)}.}
    \label{fig:schema}
\end{figure*}

%\subsubsection{The Enzyme Dataset}
%\label{sec:protein}
The Enzyme dataset
has been used for modeling and predicting drug-target interactions,
which contains a graph of 445 chemical compounds (drugs) and a graph of 664 proteins (targets).
The prediction task is to label the unknown compound-protein interactions based on both the graph structures and a small set of 2,926 known interactions.
The graph of compounds is constructed based on the SIMCOMP score \cite{hattori2003heuristics},
and the graph of proteins is constructed based on the normalized SmithWaterman score \cite{smith1981identification}.
While both graphs are provided in the dense form,
we converted them into sparse $k$NN graphs where each vertex is connected with its top 1\% neighbors.

%\subsubsection{The DBLP Dataset}
%\label{sec:dblp}
As for the DBLP dataset,
we use a subset of 34,340 DBLP publication records in the domain of Artificial Intelligence \cite{tang2008arnetminer}, from which 3 graphs are constructed as:
\begin{itemize}
    \item For the author graph ($G^{(1)}$) we draw an edge between two authors if they have coauthored an overlapping set of papers, and remove the isolated authors using a DFS algorithm. We then obtain a symmetric $k$NN graph by connecting each author with her top $0.5\%$ nearest neighbors using the count of co-authored papers as the proximity measure. The resulting graph has 5,517 vertices with 17 links per vertex on average.
    \item For the paper graph ($G^{(2)}$) we connect two papers if both of them cite another paper, or are cited by another paper.
        Like $G^{(1)}$, we remove isolated papers using DFS and construct a symmetric 0.5\%-NN graph.
        To measure the similarity of any given pair of papers,
        we represent each paper as a bag-of-citations and compute their cosine similarity.
        The resulted graph has 11,879 vertices and has an average degree of 50.
    \item For the venue graph ($G^{(3)}$) we connect two venues if they share similar research focus.
        The venue-venue similarity is measured by the total number of cross-citations in between,
        normalized by the size of the two venues involved.
        The symmetric venue graph has 22 vertices and an average degree of 7.
\end{itemize}
Tuples in the form of \texttt{(Author,Paper,Venue)} are extracted from the publication records,
and there are 15,514 tuples (cross-graph interactions) after preprocessing.

\begin{figure}
    \centering
    \includegraphics[width=0.95\linewidth]{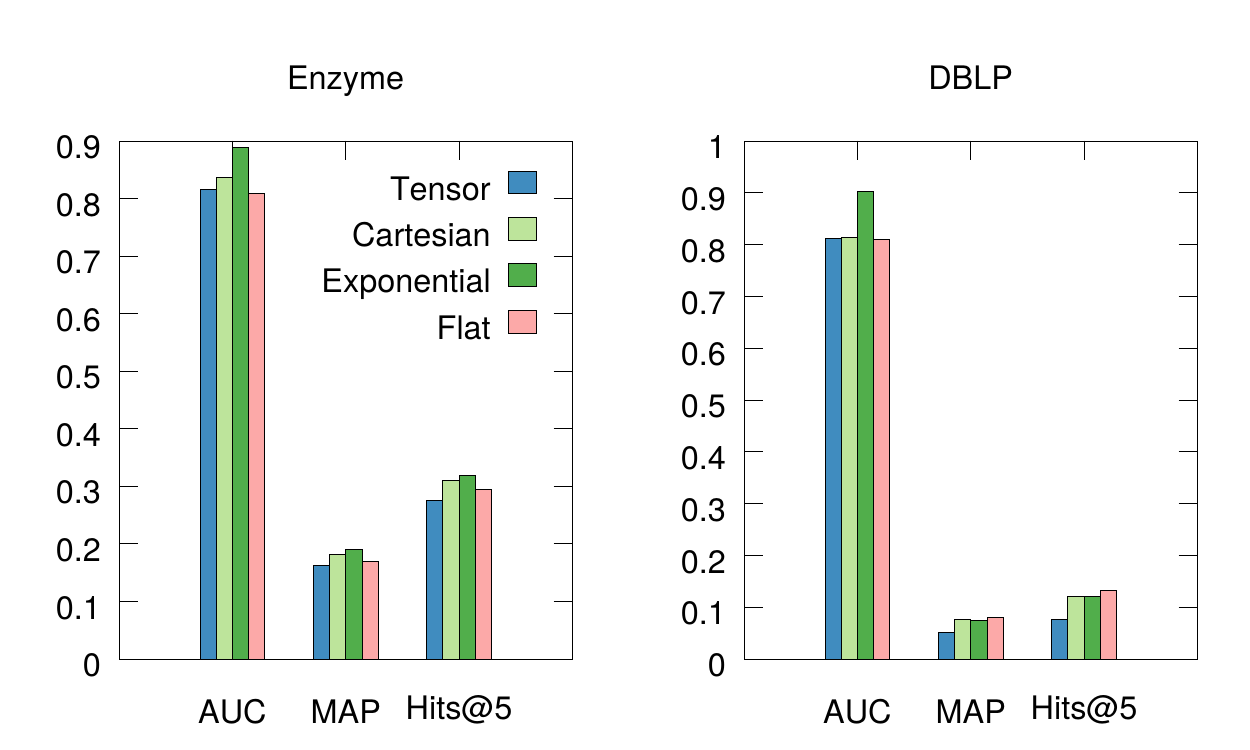}
    \caption{Performance of TOP with different SGPs.}
    \label{fig:TOP on Enzyme and DBLP}
\end{figure}

\subsection{Methods for Comparison}

\begin{itemize}
    \item Transductive Learning over Product Graph (\textbf{TOP}). \\
        The proposed method.
        We explore the following 
        $\kappa$'s for parametrizing the spectral graph product. \\
        {
            \small
            \\
            \centering
            \begin{tabular}{cll}
                Name & $\kappa(x,y)$ $(J=2)$ & $\kappa(x,y,z)$ $(J=3)$ \\
                \midrule
                Tensor & $xy$ & $xyz$ \\
                Cartesian & $x+y$ & $x+y+z$ \\
                Exponential & $e^{x+y}$ & $e^{xy + yz + xz}$ \\
                Flat & $1$ & 1 \\
            \end{tabular}
        }
    \item Tensor Factorization (\textbf{TF}) and Graph-regularized TF (\textbf{GRTF}).
        In TF we factorize $f \in \mathbb{R}^{n_1 \times \dots \times n_J}$
        as a set of dimensionality-reduced latent factors
        $C^{d_1, \times \dots \times d_J}$, $U_1^{n_1 \times d_1}, \dots, U_J \in \mathbb{R}^{n_J \times d_J}$.
        In GRTF,
        we further enhanced the traditional TF by adding graph regularizations to the objective function,
        which enforce the model to be aware of the context information in $G^{(j)}$'s
        \cite{narita2012tensor, cai2011graph};
    \item One-class Nearest Neighbor (\textbf{NN}).
        We score each tuple $\left(i_1, \dots, i_J\right)$ in the test set with
        $
            \hat{f}{(i_1, \dots, i_J)} = \mathrm{max}_{\left( i_1', \dots, i_J' \right) \in \mathcal{O}} \ \prod_{j=1}^J G_{i_j i'_j}
            $.
        That is,
        we assume the tuple-tuple similarity can be factorized as
        the product of vertex-level similarities across different graphs.
        We experimented with several other similarity measures and empirically found the multiplicative similarity leads to the best overall performance.
        Note it does not rely on the presence of any negative examples.
    \item Ranking Support Vector Machines \cite{joachims2002optimizing} (\textbf{RSVM}).
        For the task of completing the missing paper in \texttt{(Author,?,Venue)},
        we use a Learning-to-Rank strategy by treating \texttt{(Author,Venue)} as the query
        and \texttt{Paper} as the document to be retrieved.
        The query feature is constructed by concatenating the eigen-features of $\texttt{Author}$ and $\texttt{Venue}$,
        where we define the eigen-feature of vertex $i_j$ in graph $j$ as $V^{(j)}_{i_j} \in \mathbb{R}^{d_j}$.
        The feature for each query-document pair is obtained by taking the tensor product of the query feature and document eigen-feature.
    \item Low-rank Tensor Kernel Machines (\textbf{LTKM}).
        While traditional tensor-based kernel construction methods for tuples suffer from poor scalability. 
        We propose to speedup by replacing each individual kernel with its low-rank approximation before tensor product,
        leading to a low-rank kernel of tuples which allows more efficient optimization routines.
\end{itemize}

\begin{figure*}[ht]
    \centering
    \includegraphics[width=0.825\linewidth]{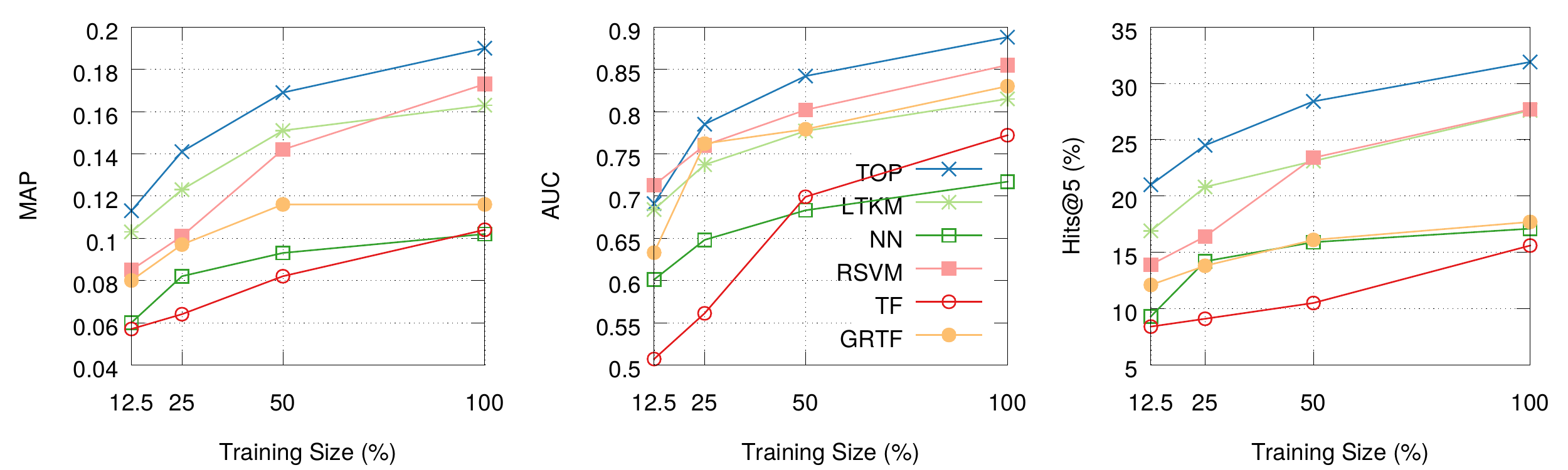}
    \caption{Test-set performance of different methods on Enzyme.}
    \label{fig:Methods on Enzyme}
\end{figure*}

\begin{figure*}[ht]
    \centering
    \includegraphics[width=0.825\linewidth]{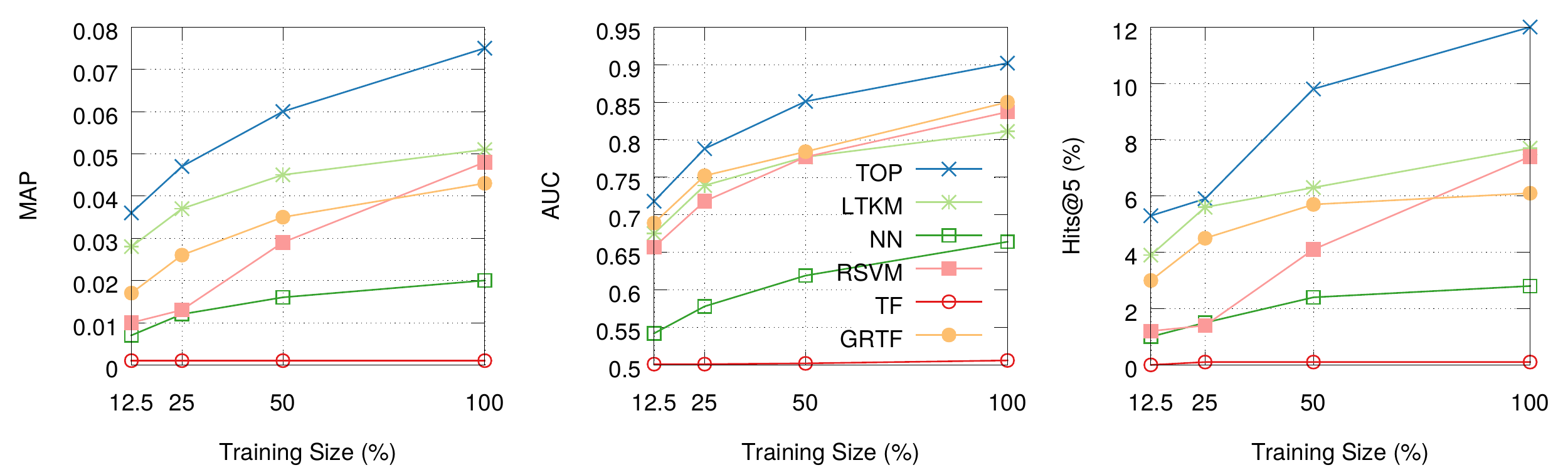}
    \caption{Test-set performance of different methods on DBLP.}
    \label{fig:Methods on DBLP}
\end{figure*}

For fair comparison,
loss functions for TF, GRTF, RSVM and LTKM
are set to be exactly the same as that for TOP, i.e. E.q.\ \eqref{eq:loss}.
All algorithms are trained using a mini-batched stochastic gradient descent.

We use the same eigensystems (eigenvectors and eigenvalues) of the $G^{(j)}$'s
as the input for TOP, RSVM and LTKM.
The number of top-eigenvalues/eigenvectors $d_j$ for graph $j$ is chosen
such that $\lambda^{(j)}_1, \dots ,\lambda^{(j)}_{d_j}$
approximately cover $80\%$ of the total spectral energy of $G^{(j)}$.
With respect to this criterion,
we choose $d_1=1,281$, $d_2 = 2,170$, $d_3 = 6$ for DBLP,
and $d_1=150$, $d_2=159$ for Enzyme.

\subsection{Experiment Setups}
For both datasets, we randomly sample one third of known interactions 
for training (denoted by $\mathcal{O}$), one third for validation and use the remaining ones for testing.
Known interactions in the test set,
denoted by $\mathcal{T}$, are treated as positive examples.
All tuples not in $\mathcal{T}$,
denoted by $\bar{\mathcal{T}}$, are treated as negative.
Tuples that are already in $\mathcal{O}$ are removed from $\bar{\mathcal{T}}$ to avoid misleading results \cite{bordes2013translating}.

We measure algorithm performance on Enzyme
based on the quality of inferred target proteins given each compound,
namely by the ability of completing $\texttt{(Compound,?)}$.
For DBLP,
the performance is measured by the quality of inferred papers given author and venue,
namely by the ability of completing $\texttt{(Author,?,Venue)}$.
We use Mean Average Prevision (MAP),
Area Under the Curve (AUC) and Hits at Top 5 (Hits@5) as our evaluation metrics.

\subsection{Results}

Fig.\ \ref{fig:TOP on Enzyme and DBLP} compares the results of TOP
with various parameterizations of the spectral graph product (SGP).
Among those, Exponential $\kappa$ works better on average.  

Figs.\ \ref{fig:Methods on Enzyme} and \ref{fig:Methods on DBLP} show the main results,
comparing TOP (with Exponential $\kappa$) with other representative baselines.
Clearly, TOP outperforms all the other methods on both datasets in all the evaluation metrics of MAP \footnote{MAP scores for random guessing are 0.014 on Enzyme and 0.00072 on DBLP, respectively.}, AUC and Hit@5.

\begin{figure}[h]
    \centering
    \includegraphics[width=0.75\linewidth]{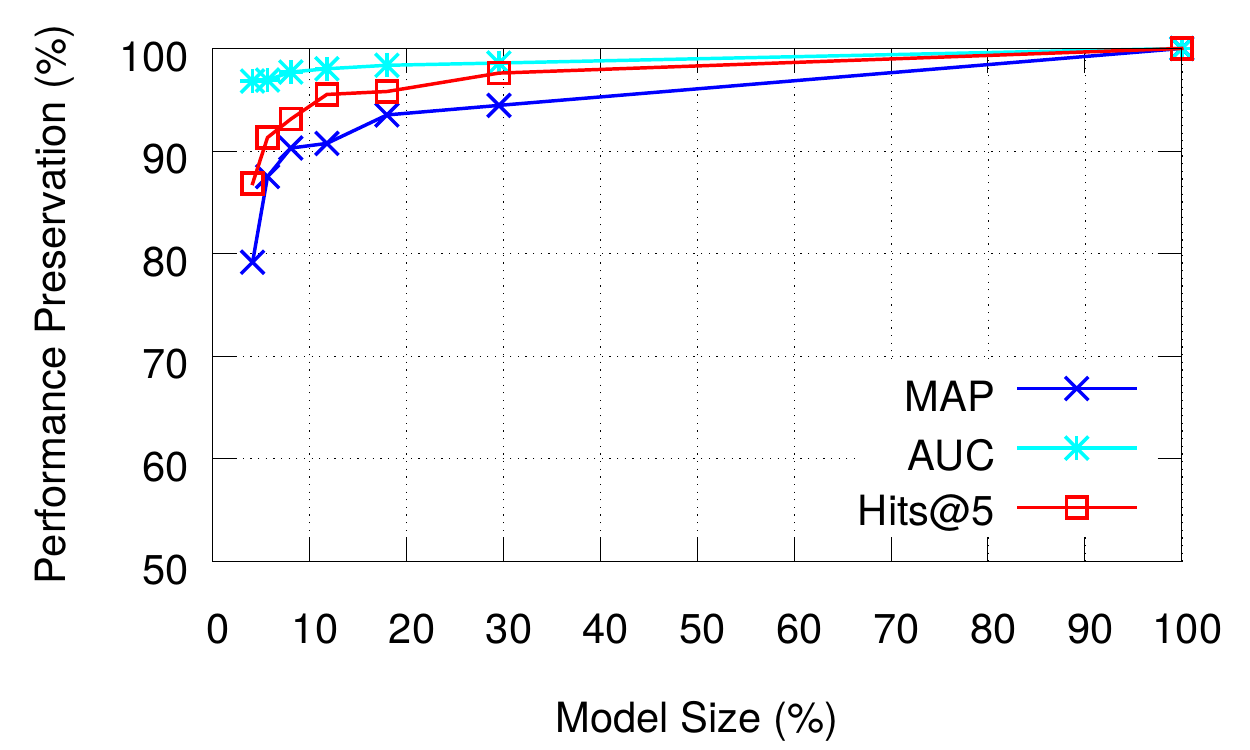}
    \caption{Performance of TOP v.s.\ model size on Enzyme.}
    \label{fig:eigen k}
\end{figure}

Fig.\ \ref{fig:eigen k} shows the performance curves of TOP on Enzyme over different model sizes (by varying the $d_j$'s).
With a relatively small model size compared with using the full spectrum, TOP's performance converges to the optimal point.

\section{Concluding Remarks}
The paper presents a novel convex optimization framework for transductive CGRL and a scalable algorithmic solution with guaranteed global optimum
and a time complexity that does not depend on the sizes of input graphs.
Our experiments on multi-graph data sets provide strong evidence for the superior power of the proposed approach in modeling cross-graph inference and large-scale optimization. 

\section*{Acknowledgements}
We thank the reviewers for their helpful comments.
This work is supported in part by the
National Science Foundation (NSF) under grants IIS-1216282, 1350364, 1546329.

\bibliography{references}
\bibliographystyle{icml2016}

\end{document}